\documentclass[letterpaper, 10 pt, conference]{ieeeconf}  

\IEEEoverridecommandlockouts                              

\usepackage{amsmath, amssymb, mathtools}

\usepackage{amsthm}
\usepackage{bm}
\usepackage{color}
\usepackage{graphicx}
\usepackage{subcaption}
\usepackage{booktabs}
\usepackage[
    style=numeric,
    sorting=none,        
    sortcites=true,      
    backend=biber, 
    maxbibnames=99,      
    natbib=true,
    doi=false,   
    url=false,   
    isbn=false,  
    eprint=false, 
    giveninits=true
]{biblatex}

\newtheorem{theorem}{Theorem}[section]

\newtheorem{proposition}[theorem]{Proposition}

\newtheorem{definition}[theorem]{Definition}
\newtheorem{remark}[theorem]{Remark}
\newtheorem{assumption}[theorem]{Assumption}

\usepackage{etoolbox}

\newbool{anonymous}
\setbool{anonymous}{false} 

\newcommand{\anon}[2]{\ifbool{anonymous}{#1}{#2}}
\newcommand{\rev}[1]{{#1}}

\begin{document}

\def\BibTeX{{\rm B\kern-.05em{\sc i\kern-.025em b}\kern-.08em
    T\kern-.1667em\lower.7ex\hbox{E}\kern-.125emX}}
\markboth{\journalname, VOL. XX, NO. XX, XXXX 2017}
{Author \MakeLowercase{\textit{et al.}}: Preparation of Papers for IEEE Control Systems Letters (August 2022)}

\title{\bf \rev{Variable Aerodynamic Damping Actuation via Co-Contraction:\\
A Structural Analogy with Variable Stiffness Actuation}}

\author{
\anon{Authors Removed}{Antonio Franchi$^{1,2}$}
\anon{}{\thanks{$^1$Robotics and Mechatronics, Electrical Engineering,  Mathematics, and Computer Science Faculty, University of Twente, The Netherlands. 
}
\thanks{$^2$Department of Computer, Control and Management Engineering, Sapienza University of Rome, Rome, Italy. {\tt\footnotesize schol@r-franchi.eu}}
\thanks{The work was partially funded by the European Commission Horizon Europe Framework under project Autoassess (101120732)}}
}

\maketitle


\begin{abstract}
\rev{This work identifies a passive aerodynamic damping effect induced by co-contraction in antagonistic redundant propulsion. Complementing prior work on aerodynamic promptness, which addressed active wrench-rate authority along constant-wrench fibers, we study the passive side: the local derivative of aerodynamic force with respect to air-relative velocity at a trim. This derivative defines an incremental aerodynamic damping coefficient. We prove that it increases monotonically along constant-force fibers under a mild aerodynamic hardening condition, and derive this property from a first-order Blade Element Theory model exposing the relevant speed--inflow coupling. The resulting mechanism, Variable Aerodynamic Damping Actuation (VADA), is formulated as an antagonistic aerodynamic actuation module and allocation principle, structurally analogous to variable-stiffness actuation at the level of fiber motions and incremental impedance modulation. An impedance-form interpretation clarifies common- and differential-mode roles, while a propeller-data-based assessment using the UIUC Propeller Database shows that the identified damping has practical small-UAV magnitude, is comparable to ordinary low-speed body-drag damping, and depends strongly on low-advance-ratio thrust sensitivity.}
\end{abstract}

\section{Introduction}
\label{sec:intro}
Actuation redundancy is pervasive in multirotor aerial vehicles, including standard hexarotors/octorotors and fully actuated platforms~\cite{park2018_tmech_odar,bodie2020_iser_volirox,Ryll2015TCST}, where multiple rotor-speed choices can realize the same commanded wrench.
This freedom is typically resolved by control-allocation strategies that minimize an effort proxy, such as least-norm or QP-type allocators~\cite{Bodson2002JGCD,Brescianini2016ICRA}.
While such choices are well aligned with endurance, they do not explicitly optimize dynamic \rev{control readiness and passive envronmental response}, which can become critical in \rev{many real-world situations}.

The rejection of aggressive disturbances and the execution of agile maneuvers are limited not only by achievable steady-state wrench magnitudes, but also by motor torque and rotor-acceleration constraints that bound wrench-rate generation~\cite{Romero2022TRO_MPCC,kaufmann2020_rss_acrobatics,saviolo2022_ral_pitcn,Bicego2020JIRS}.
Motivated by manipulability in robotics~\cite{Yoshikawa1985IJRR},~\cite{Franchi2026ICUAS} introduced a geometric, fiber-based interpretation of this phenomenon for multirotors: moving along a constant-wrench fiber injects internal aerodynamic loading, or aerodynamic co-contraction, thereby increasing the local fiber density and the associated aerodynamic promptness.

\rev{The present work addresses the passive counterpart of aerodynamic promptness. Previous work studied how co-contraction changes the speed-to-wrench differential and hence active wrench-rate authority. Here we study the derivative of aerodynamic force with respect to air-relative velocity at a trim. We show that this is the propeller-driven analogue of the VSA stiffness derivative with respect to external joint deflection: co-contraction shapes a local force--velocity slope while preserving the commanded force.}
In antagonistic Variable Stiffness Actuators (VSAs), co-contraction increases active authority and shapes the passive mechanical response via impedance regulation~\cite{Hogan1984ACC,Burdet2001Nature,Franklin2003JNP,Gribble2003JNP}.
\rev{The  question considered here is therefore whether aerodynamic co-contraction can tune a passive incremental input--output property while keeping the commanded force fixed.}
We answer positively this question under a quasi-steady thrust model in which rotor thrust depends on rotor speed and axial inflow~\cite{Bristeau2009ECC,Hoffmann2007QuadrotorAIAA}.
Specifically, we define incremental aerodynamic damping as the local map from air-relative velocity perturbations to aerodynamic force perturbations, prove that co-contraction along constant-force fibers monotonically increases it under aerodynamic hardening, and verify this condition with a first-order Blade Element Theory model.
\rev{We then estimate the same force--velocity derivative from UIUC propeller data, connecting the theoretical coefficient to measured low-advance-ratio thrust curves and to ordinary body-drag scales.}
\rev{The resulting Variable Aerodynamic Damping Actuation (VADA) mechanism is interpreted as a local structural analogy with VSA stiffness modulation at the level of fiber motions and incremental impedance modulation.}

\smallskip
\noindent\textbf{Contributions.}
Compared to existing uses of manipulability in aerial robotics that focus on attached manipulators or static design indices~\cite{Ruggiero2018RAL,Hamandi2021IJRR} and to the recent introduction of aerodynamic promptness~\cite{Franchi2026ICUAS}, this work contributes:
(i) \rev{a trim-based definition of incremental aerodynamic damping modulation, distinct from active aerodynamic promptness,}
(ii) a proof that co-contraction along constant-force fibers increases the damping coefficient under hardening,
(iii) \rev{the formulation of Variable Aerodynamic Damping Actuation as an antagonistic aerodynamic actuation module and allocation principle,}
(iv) an impedance-form interpretation of common- and differential-mode actuation in the affine-inflow regime, and
(v) \rev{a UIUC-propeller-data-based scale assessment quantifying damping magnitude, power cost, propeller-family dependence, and comparison with ordinary body-drag damping.}

\section{Passive Stiffness and Active Promptness in Antagonistic VSAs}
\label{sec:vsa}
This section recalls a standard antagonistic Variable Stiffness Actuator (VSA) model~\cite{Wolf-vsa-2016} and makes explicit a basic but useful fact: along constant-torque fibers, internal co-contraction simultaneously increases (i) passive stiffness---the incremental torque response to external deflections---and (ii) active promptness---the local fiber density (manipulability) induced by the actuator task map, in the sense of~\cite{Franchi2026ICUAS}.
Fig.~\ref{fig:vsa_schematic} summarizes the variables and tendon routing.

\begin{figure}[t]
    \centering
    \begin{subfigure}[t]{0.44\columnwidth}
        \centering
        \includegraphics[width=\linewidth]{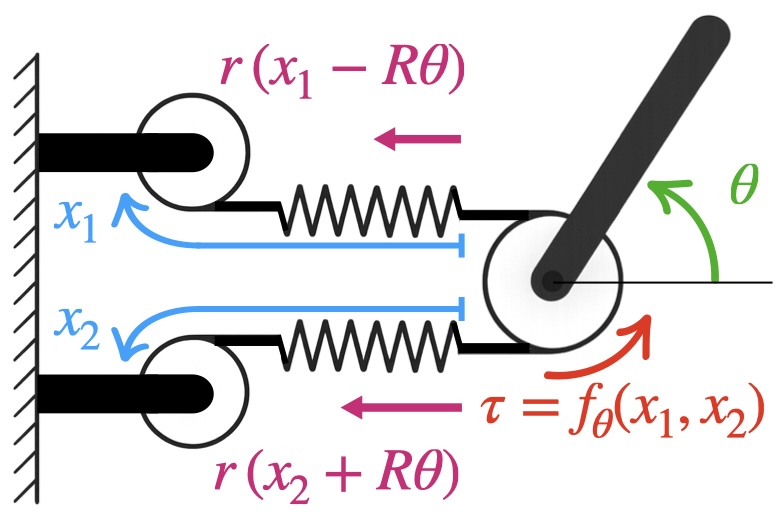}
        \caption{Antagonistic VSA. Two motors generate joint torque and locally tune stiffness via co-contraction.}
        \label{fig:vsa_schematic}
    \end{subfigure}
    \hfill
    \begin{subfigure}[t]{0.54\columnwidth}
        \centering
        \includegraphics[width=\linewidth]{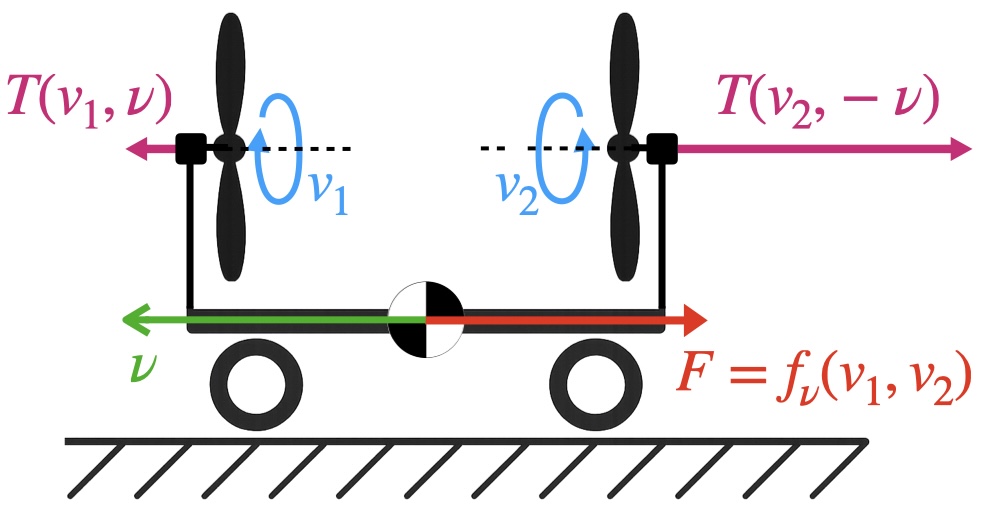}
        \caption{Antagonistic dual-rotor module. Two rotors generate net force and locally tune aerodynamic damping via co-contraction.}
        \label{fig:dualrotor_schematic}
    \end{subfigure}
    \caption{Antagonistic actuation mechanisms. Both admit fiber motions corresponding to internal co-contraction: stiffness modulation in the VSA and aerodynamic damping modulation in the dual-rotor module. \rev{The aerodynamic panel represents the idealized non-interfering inflow case used to isolate the mechanism.}}
    \label{fig:vsa_dualrotor_combined}
\end{figure}

\subsubsection*{Actuation model} Consider a single revolute joint with deflection $\theta\in\mathbb{R}$ driven by an antagonistic tendon mechanism. Let the actuator configuration be $\bm{x}=(x_1,x_2)\in\mathcal{X}\subset\mathbb{R}^2_{>0}$, where $x_1,x_2$ are the motor-controlled tendon displacements \rev{imposed when $\theta=0$}, and let $R>0$ be the pulley radius. \rev{The tendon extensions are therefore $x_1-R\theta$ and $x_2+R\theta$.} The tendon force--extension law, common to both tendons, is modeled by $r:\mathbb{R}\to\mathbb{R}_{\ge 0}$.

\begin{assumption}[Strict hardening and local admissibility]
\label{ass:vsa_hardening}
The tendon \rev{force} law satisfies $r\in\mathcal{C}^2$ and, for all $x>0$, $r'(x)>0$ and $r''(x)>0$.
We restrict attention to operating points and sufficiently small deflections such that $x_1-R\theta>0$ and $x_2+R\theta>0$ in the neighborhood of interest.
\end{assumption}
Under this model, the net elastic torque exerted on the joint is $\tau:\mathcal{X}\times \mathbb{R}\to \mathbb{R}$ defined by
\begin{equation}
\label{eq:vsa_tau}
(\bm{x},\theta)\mapsto \tau(\bm{x},\theta)
\coloneqq
R\Big(r(x_1-R\theta)-r(x_2+R\theta)\Big).
\end{equation}

\subsection{Stiffness and promptness at $\theta=0$}
\label{subsec:vsa_theta0}
At the nominal operating point $\theta=0$, define the task map $f:\mathcal{X}\to\mathbb{R}$ as
\begin{equation}
\label{eq:vsa_task_map}
\bm{x}\mapsto f(\bm{x})
\coloneqq \tau(\bm{x},0)
= R\big(r(x_1)-r(x_2)\big).
\end{equation}
The fiber of a desired torque $\bar\tau\in\mathbb{R}$ is the level set
\begin{equation}
\label{eq:vsa_fiber}
\mathcal{F}_{\bar\tau}\coloneqq f^{-1}(\bar\tau)=
\{\bm{x}\in\mathcal{X}\mid f(\bm{x})=\bar\tau\},
\end{equation}
consistent with the fiber viewpoint used in~\cite{Franchi2026ICUAS}.

\subsubsection*{Passive stiffness}
The passive joint stiffness $\sigma:\mathcal{X}\to\mathbb{R}_{>0}$ is the incremental resistance to external angular deflections at $\theta=0$:
\begin{equation}
\label{eq:vsa_def_stiffness}
\bm{x}\mapsto \sigma(\bm{x})
\coloneqq
-\left.\tfrac{\partial \tau}{\partial\theta}\right|_{\theta=0}
=
R^2\big(r'(x_1)+r'(x_2)\big).
\end{equation}

\subsubsection*{Active promptness (fiber density)}
The active torque promptness $\rho:\mathcal{X}\to \mathbb{R}_{\geq 0}$ is the fiber-density (manipulability) under the Euclidean metric on actuator rates~\cite{Franchi2026ICUAS}:
\begin{equation}
\bm{x}\mapsto\rho(\bm{x})
\coloneqq
\sqrt{\det(J_f(\bm{x})\;J_f^\top(\bm{x}))},
\label{eq:torque_promptness}
\end{equation}
where $J_f(\bm{x})\coloneqq\nabla f(\bm{x})$.
For the scalar task \eqref{eq:vsa_task_map},
\begin{equation}
\label{eq:vsa_Jf}
J_f(\bm{x})=
\begin{bmatrix}
R\,r'(x_1) & -R\,r'(x_2)
\end{bmatrix},
\end{equation}
and therefore
\begin{equation}
\label{eq:vsa_def_promptness}
\rho(\bm{x})=
\|J_f(\bm{x})\|=
R\sqrt{r'(x_1)^2+r'(x_2)^2}.
\end{equation}

\begin{remark}[Different physical roles]
\label{rem:vsa_roles}
Although both $\sigma$ and $\rho$ are induced by the same nonlinear map, they encode different incremental phenomena: $\sigma$ maps external deflections to passive restoring torque ($\Delta\theta\mapsto\Delta\tau$), whereas $\rho$ maps internal actuation variations to active torque changes ($\Delta \bm{x}\mapsto\Delta\tau$).
\rev{For example, the force law $r(x)=\tfrac{1}{2}kx^2$, with $k>0$, has tangent stiffness $r'(x)=kx$ and corresponds to stored elastic energy $E(x)=\int_0^x r(s)ds=\tfrac{1}{6}kx^3$. Thus it is a hardening force law.
Note that $r$ here should not be confused with the stored elastic energy even though the same quadratic expression would represent the energy of a linear spring in a different notation.}
\end{remark}

\subsection{Strict monotonic coupling along constant-torque fibers}
\label{subsec:vsa_monotonic}
\begin{definition}[Internal co-contraction direction]
\label{def:co_contraction}
Let $\bm{x}\in\mathcal{F}_{\bar\tau}$.
A nonzero displacement $dx=[dx_1,dx_2]^\top\in T_{\bm{x}}\mathcal{F}_{\bar\tau}$ is an internal co-contraction if $dx_1>0$ and $dx_2>0$.
\end{definition}
\begin{proposition}[Co-contraction increases stiffness and promptness]
\label{prop:vpa}
Under Assumption~\ref{ass:vsa_hardening}, along any constant-torque fiber $\mathcal{F}_{\bar\tau}$, every internal co-contraction displacement strictly increases both $\sigma$ and $\rho$.
Moreover, on any connected $1$D branch of $\mathcal{F}_{\bar\tau}$, $\sigma$ and $\rho$ are strictly monotonically related.
\end{proposition}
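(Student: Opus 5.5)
The proof works directly on the tangent space of the fiber and reduces everything to the signs of $r'$ and $r''$ from Assumption~\ref{ass:vsa_hardening}.

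\emph{Step 1: tangent space.} Since $J_f(\bm{x})=[R\,r'(x_1),\,-R\,r'(x_2)]$ and $r'>0$, we have $J_f\neq 0$. Hence $\bar\tau$ is a regular value of $f$, and $\mathcal{F}_{\bar\tau}$ is a smooth $1$D embedded submanifold of $\mathcal{X}$. Its tangent space is $T_{\bm{x}}\mathcal{F}_{\bar\tau}=\ker J_f(\bm{x})$, that is, the displacements satisfying $r'(x_1)\,dx_1=r'(x_2)\,dx_2$. This space is spanned by $v(\bm{x})\coloneqq[r'(x_2),\,r'(x_1)]^\top$, which has strictly positive components. Consequently every nonzero tangent vector is a scalar multiple $c\,v$ with $c\neq0$. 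Internal co-contractions in the sense of Definition~\ref{def:co_contraction} are exactly the vectors with $c>0$, and $c<0$ gives co-relaxation. In particular, co-contraction is a well-defined orientation of each fiber.

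\emph{Step 2: directional derivatives.} Differentiating \eqref{eq:vsa_def_stiffness} and \eqref{eq:vsa_def_promptness} along $dx=c\,v$ gives
\begin{align*}
d\sigma &= R^2\big(r''(x_1)dx_1+r''(x_2)dx_2\big) = cR^2\big(r''(x_1)r'(x_2)+r''(x_2)r'(x_1)\big),\\
d(\rho^2) &= 2R^2\big(r'(x_1)r''(x_1)dx_1+r'(x_2)r''(x_2)dx_2\big) = 2cR^2 r'(x_1)r'(x_2)\big(r''(x_1)+r''(x_2)\big).
\end{align*}
By Assumption~\ref{ass:vsa_hardening}, $r'>0$ and $r''>0$, so both brackets are strictly positive. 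Therefore $d\sigma>0$ and $d\rho>0$ whenever $c>0$, using $\rho>0$ and $d\rho=d(\rho^2)/(2\rho)$. This proves the first claim.

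\emph{Step 3: monotonic relation.} Take a connected $1$D branch $\mathcal{B}$ of $\mathcal{F}_{\bar\tau}$ and parametrize it by an integral curve $\gamma(s)$ of the nonvanishing field $v$. This covers the whole branch, since a connected $1$D manifold is the image of an interval (or a circle), and $\gamma$ has constant orientation.

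The circle case is excluded. Along $\gamma$, $\dot x_1=r'(x_2)>0$, so $x_1$ is strictly increasing and the curve cannot close up. Hence $\gamma$ is injective on an interval $I$.

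From Step 2, $s\mapsto\sigma(\gamma(s))$ and $s\mapsto\rho(\gamma(s))$ are both $\mathcal{C}^1$ with strictly positive derivatives. They are therefore strictly increasing homeomorphisms onto their images. It follows that $\rho\circ\gamma\circ(\sigma\circ\gamma)^{-1}$ is a strictly increasing function relating $\sigma$ to $\rho$ on $\mathcal{B}$.

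\emph{Main obstacle.} The computations are routine. The only delicate point is the global step: making sure a connected branch is a single injectively parametrized curve and not a closed loop. The monotonicity of $x_1$ settles this.

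A second subtlety is domain regularity. The assumption only gives $r\in\mathcal{C}^2$ with $r',r''>0$ for positive arguments, and $\mathcal{X}\subset\mathbb{R}^2_{>0}$ guarantees this along the branch. If needed, I would also note that $\mathcal{X}$ is taken open, so the regular-value theorem applies.
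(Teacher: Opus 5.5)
Your proof is correct and follows the same route as the paper. Both impose $df=0$ to see that fiber-tangent displacements have $dx_1,dx_2$ of the same sign, read off $d\sigma>0$ and $d\rho>0$ from $r'>0$, $r''>0$, and obtain the monotone relation by inverting $s\mapsto\sigma(\bm{x}(s))$ along a branch. Your explicit tangent generator $[r'(x_2),\,r'(x_1)]^\top$, your computation via $d(\rho^2)$ (which avoids the paper's prefactor slip: it should be $R^2/\rho$, not $R/\rho$, with no effect on the sign), and your regular-value and no-closed-loop argument only make rigorous the global step that the paper asserts in one line.
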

\begin{proof}
Let $\bm{x}\in\mathcal{F}_{\bar\tau}$ and $dx\in T_{\bm{x}}\mathcal{F}_{\bar\tau}$.
Differentiating \eqref{eq:vsa_task_map} and enforcing $df=0$ yields
\begin{equation}
\label{eq:vsa_fiber_tangent}
0=df=R\big(r'(x_1)\,dx_1-r'(x_2)\,dx_2\big)
\Rightarrow
\tfrac{dx_2}{dx_1}=\tfrac{r'(x_1)}{r'(x_2)}.
\end{equation}
Since $r'(x)>0$ for $x>0$, \eqref{eq:vsa_fiber_tangent} implies that $dx_1$ and $dx_2$ have the same sign, hence co-contraction is an admissible fiber direction.
From \eqref{eq:vsa_def_stiffness},
\begin{equation}
\label{eq:vsa_diff_sigma}
d\sigma=R^2\big(r''(x_1)\,dx_1+r''(x_2)\,dx_2\big),
\end{equation}
which is strictly positive under Assumption~\ref{ass:vsa_hardening} and $dx_1,dx_2>0$.
Similarly, from \eqref{eq:vsa_def_promptness} and $\rho>0$,
\begin{equation}
\label{eq:vsa_diff_rho}
d\rho=
\tfrac{R}{\rho(\bm{x})}
\Big(r'(x_1)r''(x_1)\,dx_1+r'(x_2)r''(x_2)\,dx_2\Big),
\end{equation}
which is strictly positive because $r'(x_i)>0$, $r''(x_i)>0$, and $dx_i>0$.
Hence both $\sigma$ and $\rho$ strictly increase along any $\mathcal{C}^1$ co-contraction parameterization $\bm{x}(s)\subset\mathcal{F}_{\bar\tau}$.
Finally, on any connected $1$D fiber branch, strict monotonicity implies that $s\mapsto\sigma(\bm{x}(s))$ is invertible, and therefore $\rho$ is a strictly increasing function of $\sigma$ along that branch.
\end{proof}
Proposition~\ref{prop:vpa} formalizes that, for antagonistic mechanisms with strictly hardening elasticity, stiffness modulation through co-contraction is accompanied by a simultaneous and monotone increase of local torque promptness along constant-torque fibers.
\rev{This observation provides the mechanical reference point for the aerodynamic result: the previous work~\cite{Franchi2026ICUAS} used this analogy on the active promptness side, while the present work completes it on the passive impedance side.}

\section{The Dual-Rotor Aerodynamic Actuation Module}
\label{sec:dualrotor}
We introduce a minimal redundant aerodynamic actuation module that exhibits an antagonistic internal mode.
A rigid body is constrained to translate along a line and is actuated by two propellers whose thrust axes are collinear with the translation direction (Fig.~\ref{fig:dualrotor_schematic}).
Let $\nu\in\mathbb{R}$ denote the air-relative translational velocity along this axis.
The control inputs are the propeller angular speeds $\bm{v}=(v_1,v_2)\in\mathcal{V}\subset\mathbb{R}^2_{>0}$.
The propellers are mounted antagonistically so that rotor~1 contributes force in the $+\nu$ direction while rotor~2 contributes force in the $-\nu$ direction.
As a result, for a given commanded net force there typically exists a one-dimensional family of speed pairs, enabling internal co-contraction, i.e., simultaneous increase of both speeds, as in antagonistic tendon actuation~\cite{Franchi2026ICUAS}.
\rev{We use the term actuation module deliberately: the functional object is the pair of opposed rotors together with its internal allocation, not a single propeller considered in isolation.}

\subsubsection*{Non-interference assumption}
To isolate the operating-point dependence of each rotor on axial inflow, we assume negligible mutual wake interaction.
\begin{assumption}[Negligible mutual aerodynamic interference]
\label{ass:no_interference}
The rotors are mounted with sufficient separation and/or lateral offset such that, in the operating regime of interest, the inflow at each rotor disk is well approximated by the ambient axial inflow induced by the body's motion, and not by the other rotor's slipstream.
\end{assumption}
\begin{remark}[\rev{Scope of the non-interference assumption}]
\label{rem:wake_scope}
\rev{Assumption~\ref{ass:no_interference} is not intended to describe arbitrary multirotor layouts. It isolates the mechanism in a configuration where the inflow at each rotor is dominated by the body-relative axial flow. In coupled wake fields, the definition of incremental damping below still applies to the full force map, but the monotonicity property must be re-established for that coupled aerodynamic model.}
\end{remark}

\subsubsection*{Single-rotor thrust map}
Let
\[
T:\ \mathbb{R}_{\ge 0}\times \mathbb{R} \to \mathbb{R}_{\ge 0},
\qquad (v,\nu_{in})\mapsto T(v,\nu_{in})
\]
denote the thrust magnitude generated by a single propeller at angular speed $v$ under axial inflow velocity $\nu_{in}$.
We adopt the convention that $\nu_{in}>0$ denotes inflow that reduces the effective angle of attack, so that increasing $\nu_{in}$ tends to reduce thrust in the regime of interest.
Under Assumption~\ref{ass:no_interference} and the antagonistic mounting of Fig.~\ref{fig:dualrotor_schematic}, the two rotors experience opposite inflows:
\begin{equation}
\label{eq:inflows}
\nu_{in,1}=\nu,
\qquad
\nu_{in,2}=-\nu.
\end{equation}
Accordingly, the net aerodynamic force along the axis is
\begin{equation}
\label{eq:net_force}
F(\bm{v},\nu)=T(v_1,\nu)-T(v_2,-\nu).
\end{equation}

\subsubsection*{Regularity and monotonicity properties} We use the following local properties, which are standard under low-to-moderate advance-ratio operation and are supported both analytically, via the first-order Blade Element Theory model of Sec.~\ref{sec:bet}, and experimentally, through the propeller-data assessment of Sec.~\ref{sec:data_based_scale}.

\begin{assumption}[Monotone thrust in rotor speed]
\label{ass:monotone_thrust}
In the operating regime of interest, $T$ is $\mathcal{C}^2$ and $\tfrac{\partial T}{\partial v}(v,\nu_{in})>0$ for all $v>0$.
\end{assumption}
\begin{assumption}[Aerodynamic damping in inflow]
\label{ass:aero_damping}
For any fixed $v>0$, increasing axial inflow reduces thrust.
Equivalently, the inflow-sensitivity
\begin{equation}
\label{eq:def_lambda}
\lambda(v,\nu_{in})\coloneqq
-\tfrac{\partial T}{\partial \nu_{in}}(v,\nu_{in})>0
\end{equation}
holds in the operating regime.
\end{assumption}
\begin{assumption}[Aerodynamic hardening of damping]
\label{ass:aero_hardening}
The inflow-sensitivity increases with rotor speed, i.e.,
\begin{equation}
\label{eq:aero_hardening}
\tfrac{\partial \lambda}{\partial v}(v,\nu_{in})
=
-\tfrac{\partial^2 T}{\partial \nu_{in}\partial v}(v,\nu_{in})>0,
\qquad \forall v>0.
\end{equation}
\end{assumption}
\begin{remark}[Analogy with mechanical hardening]
Assumption~\ref{ass:aero_hardening} mirrors Assumption~\ref{ass:vsa_hardening}: $r''(x)>0$ implies that tangent stiffness grows with internal displacement, whereas $-\partial^2T/(\partial \nu_{in}\partial v)>0$ implies that the incremental inflow sensitivity, and thus the local damping effect defined next, grows with internal rotor speed.
Section~\ref{sec:bet} provides a first-principles verification under a simple thrust model.
\end{remark}

\section{Variable Aerodynamic Damping \rev{Actuation} (VADA)}
\label{sec:vada}
We now formalize the passive dynamical quantity that, for the dual-rotor aerodynamic module, plays the structural role corresponding to stiffness in antagonistic VSAs.
\rev{Here passive refers to the local velocity-to-force response around an already powered trim. It does not imply zero energetic cost, since maintaining the trim requires continuous rotor power, analogously to what happens in VSAs.}

\subsection{Incremental aerodynamic damping at $\nu=0$}
We define the incremental aerodynamic damping as the local resistance of the net aerodynamic force to small perturbations of air-relative velocity, evaluated at $\nu=0$:
\begin{equation}
\label{eq:def_sigma_a}
\sigma_a(\bm{v})
\coloneqq
-\left.\tfrac{\partial F}{\partial \nu}\right|_{\nu=0}.
\end{equation}
Using \eqref{eq:net_force} and the chain rule with \eqref{eq:inflows} yields
\begin{align}
\sigma_a(\bm{v})
&=
-\left[
\tfrac{\partial T}{\partial \nu_{in}}(v_1,0)
-
\tfrac{\partial T}{\partial \nu_{in}}(v_2,0)(-1)
\right]
\nonumber\\
&=\lambda(v_1,0)+\lambda(v_2,0),
\label{eq:aero_damping_sum}
\end{align}
which is structurally identical to the VSA stiffness $\sigma(\bm{x})=R^2(r'(x_1)+r'(x_2))$ in \eqref{eq:vsa_def_stiffness}, with $\lambda(\cdot,0)$ replacing $r'(\cdot)$.

\subsection{Aerodynamic co-contraction along constant-force fibers}
At $\nu=0$, define the force task map $f_a:\mathcal{V}\to\mathbb{R}$ by
\begin{equation}
\label{eq:fa_def}
f_a(\bm{v})\coloneqq F(\bm{v},0)=T(v_1,0)-T(v_2,0),
\end{equation}
and let the fiber at a commanded force $\bar F$ be
\begin{equation}
\label{eq:fiber_F}
\mathcal{F}_{\bar F}\coloneqq \{\bm{v}\in\mathcal{V}\mid f_a(\bm{v})=\bar F\}.
\end{equation}
\begin{definition}[Aerodynamic internal co-contraction]
\label{def:aero_cocontraction}
Let $\bm{v}\in\mathcal{F}_{\bar F}$.
A nonzero displacement $dv=[dv_1,dv_2]^\top\in T_{\bm{v}}\mathcal{F}_{\bar F}$ is an aerodynamic co-contraction direction if $dv_1>0$ and $dv_2>0$.
\end{definition}
\begin{proposition}[Co-contraction increases aerodynamic damping]
\label{prop:vada_damping}
Under Assumptions~\ref{ass:monotone_thrust} and \ref{ass:aero_hardening}, along any constant-force fiber $\mathcal{F}_{\bar F}$, every aerodynamic co-contraction displacement strictly increases the incremental aerodynamic damping $\sigma_a$.
\end{proposition}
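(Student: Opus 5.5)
The argument mirrors the proof of Proposition~\ref{prop:vpa}, with $T(\cdot,0)$ in place of $r$ and $\lambda(\cdot,0)$ in place of $r'$. We first characterize the tangent space of the fiber, then differentiate the additive expression \eqref{eq:aero_damping_sum} for $\sigma_a$ along a co-contraction direction.

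\emph{Tangent directions of the fiber.} Take $\bm{v}\in\mathcal{F}_{\bar F}$. By Assumption~\ref{ass:monotone_thrust}, $\nabla f_a(\bm{v})=\big[\tfrac{\partial T}{\partial v}(v_1,0),\,-\tfrac{\partial T}{\partial v}(v_2,0)\big]$ is nonzero. Hence $\mathcal{F}_{\bar F}$ is locally a $\mathcal{C}^2$ one-dimensional manifold, by the implicit function theorem. Differentiating \eqref{eq:fa_def} and imposing $df_a=0$ gives
\begin{equation*}
\tfrac{\partial T}{\partial v}(v_1,0)\,dv_1=\tfrac{\partial T}{\partial v}(v_2,0)\,dv_2 .
\end{equation*}
Both partial derivatives are strictly positive, so $dv_1$ and $dv_2$ always share the same sign. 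The tangent line therefore contains a co-contraction direction with $dv_1,dv_2>0$, which shows that Definition~\ref{def:aero_cocontraction} is non-vacuous. Any nonzero tangent vector with both components positive is such a direction.

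\emph{Variation of $\sigma_a$.} From \eqref{eq:aero_damping_sum}, $\sigma_a(\bm{v})=\lambda(v_1,0)+\lambda(v_2,0)$. Because $T\in\mathcal{C}^2$, $\lambda$ is $\mathcal{C}^1$ in $v$, and
\begin{equation*}
d\sigma_a=\tfrac{\partial\lambda}{\partial v}(v_1,0)\,dv_1+\tfrac{\partial\lambda}{\partial v}(v_2,0)\,dv_2 .
\end{equation*}
Assumption~\ref{ass:aero_hardening} evaluated at $\nu_{in}=0$ makes both coefficients strictly positive. Since $dv_1,dv_2>0$, we get $d\sigma_a>0$. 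For a finite statement, take a $\mathcal{C}^1$ parameterization $\bm{v}(s)\subset\mathcal{F}_{\bar F}$ whose velocity stays in the co-contraction cone. Then $\tfrac{d}{ds}\sigma_a(\bm{v}(s))>0$, so $\sigma_a$ is strictly increasing along it.

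\emph{Points needing care.} There is no real obstacle here; two details should be checked. First, Assumption~\ref{ass:aero_damping} ($\lambda>0$) is not needed for monotonicity. It only guarantees that $\sigma_a$ is a positive damping, and that is presumably why the proposition omits it. Second, all assumptions must be applied at $\nu_{in}=0$. Equation \eqref{eq:inflows} maps rotor~2's inflow to $-\nu$, and at $\nu=0$ both rotors sit at $\nu_{in}=0$, so the hypotheses at that inflow suffice. The sign bookkeeping in the chain rule, where rotor~2's sign flip is absorbed into the sum, has already been done in \eqref{eq:aero_damping_sum}.
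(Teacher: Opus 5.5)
Your proof is correct and follows the paper's argument essentially step for step. You impose $df_a=0$ to get $\tfrac{\partial T}{\partial v}(v_1,0)\,dv_1=\tfrac{\partial T}{\partial v}(v_2,0)\,dv_2$, use Assumption~\ref{ass:monotone_thrust} to show co-contraction is an admissible fiber direction, and then differentiate $\sigma_a=\lambda(v_1,0)+\lambda(v_2,0)$ and conclude $d\sigma_a>0$ from Assumption~\ref{ass:aero_hardening}. Your additions are sound refinements that do not change the route: the implicit-function regularity of the fiber, the integrated statement along a $\mathcal{C}^1$ co-contraction path, and the remark that Assumption~\ref{ass:aero_damping} is not needed.
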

\begin{proof}
Along $\mathcal{F}_{\bar F}$ we have $df_a=0$. Differentiating \eqref{eq:fa_def} gives
\begin{equation}
\label{eq:dfa0}
0=df_a=
\tfrac{\partial T}{\partial v}(v_1,0)\,dv_1
-
\tfrac{\partial T}{\partial v}(v_2,0)\,dv_2,
\end{equation}
hence
\begin{equation}
\label{eq:dv_ratio}
\tfrac{dv_2}{dv_1}
=
\tfrac{\partial T}{\partial v}(v_1,0) \,\big/\, \tfrac{\partial T}{\partial v}(v_2,0).
\end{equation}
By Assum.~\ref{ass:monotone_thrust}, the numerator and denominator are positive for $v_1,v_2>0$, hence $dv_1$ and $dv_2$ share the same sign and co-contraction is an admissible fiber direction.
From \eqref{eq:aero_damping_sum},
\begin{equation}
\label{eq:dsigma_a}
d\sigma_a=
\tfrac{\partial \lambda}{\partial v}(v_1,0)\,dv_1
+
\tfrac{\partial \lambda}{\partial v}(v_2,0)\,dv_2.
\end{equation}
Assumption~\ref{ass:aero_hardening} yields $\partial\lambda/\partial v>0$, and co-contraction implies $dv_1,dv_2>0$, hence $d\sigma_a>0$.
\end{proof}

\subsection{Active aerodynamic promptness}
\label{subsec:aero_promptness}
A complementary active quantity  \rev{introduced in~\cite{Franchi2026ICUAS}} is the local fiber density,  called aerodynamic promptness, which measures how effectively bounded rotor-speed variations generate force-rate variations.
For the scalar task map \eqref{eq:fa_def},
\begin{equation}
\label{eq:J_fa}
J_{f_a}(\bm{v})=\nabla f_a(\bm{v})=
\begin{bmatrix}
\tfrac{\partial T}{\partial v}(v_1,0) & -\tfrac{\partial T}{\partial v}(v_2,0)
\end{bmatrix},
\end{equation}
and, under the Euclidean metric on actuator rates,
\begin{equation}
\label{eq:rho_a_general}
\rho_a(\bm{v})=
\sqrt{\det(J_{f_a}J_{f_a}^\top)}=
\|J_{f_a}(\bm{v})\|.
\end{equation}
For the quadratic static thrust law $T(v_i,0)=k_Tv_i^2$ used in~\cite{Franchi2026ICUAS}, \eqref{eq:rho_a_general} yields
\begin{equation}
\rho_a(\bm{v})=2k_T\sqrt{v_1^2+v_2^2},
\end{equation}
recovering the active promptness increase under antagonistic co-contraction previously shown in~\cite{Franchi2026ICUAS}.

\subsection{Definition and structural analogy}
\begin{definition}[Variable Aerodynamic Damping \rev{Actuation}]
\label{def:vada}
\rev{Variable Aerodynamic Damping Actuation (VADA) denotes an antagonistic aerodynamic actuation module and internal allocation principle for which there exists a family of constant-force fibers $\{\mathcal{F}_{\bar F}\}$ on which co-contraction can strictly increase the incremental aerodynamic damping $\sigma_a$.}
\end{definition}
\begin{remark}[Structural analogy with antagonistic VSAs]
\label{rem:structural_analogy}
\rev{The correspondence with VSAs is local and structural. It concerns fiber motions, the common/differential decomposition, and the monotone modulation of an incremental impedance-like coefficient. It should not be read as a full physical equivalence between stiffness and damping mechanisms.}
The formal replacements are
\[
(\bm{x},\ r',\ r'')
\quad \longleftrightarrow \quad
(\bm{v},\ \lambda(\cdot,0),\ \partial\lambda/\partial v(\cdot,0)),
\]
with $\nu$ acting as the environmental velocity perturbation corresponding to the VSA environmental deflection $\theta$.
Under this mapping, co-contraction modulates passive stiffness in VSAs and incremental aerodynamic damping in the dual-rotor module, while simultaneously increasing the active promptness in both systems.
\end{remark}

\rev{
The correspondence is summarized in Table~\ref{tab:vsa_vada_missing_corner}.
The table separates the active task-responsiveness side from the passive interaction-shaping side.
In this organization, the active aerodynamic promptness entry corresponds to the fiber-density viewpoint introduced in~\cite{Franchi2026ICUAS}, whereas the bottom right corner is the passive aerodynamic quantity introduced, formalized, and validated in the present work.}

\begin{table}[t]
\centering
\footnotesize
\setlength{\tabcolsep}{3pt}
\renewcommand{\arraystretch}{1.25}
\caption{\rev{Structural organization of the VSA--VADA analogy. The previous work~\cite{Franchi2026ICUAS} addressed the active aerodynamic promptness corner (bottom left); the present work formalizes the missing passive aerodynamic damping corner (bottom right).}}
\label{tab:vsa_vada_missing_corner}
\begin{tabular}{|p{0.18\columnwidth}|p{0.37\columnwidth}|p{0.38\columnwidth}|}
\hline
&
\textbf{\rev{Active task responsiveness}}
&
\textbf{\rev{Passive interaction shaping}}
\\
\hline
\textbf{\rev{Antagonistic VSA/Muscles}}
&
\rev{Task/torque map $f(\bm{x})$ and torque promptness $\rho(\bm{x})$}
&
\rev{Joint stiffness
$\sigma(\bm{x})=-\left.\partial \tau/\partial \theta\right|_{\theta=0}$}
\\
\hline
\textbf{\rev{Dual-rotor module}}
&
\rev{Force map $f_a(\bm{v})$ and aerodynamic promptness $\rho_a(\bm{v})$}
&
\rev{Aerodynamic damping
$\sigma_a(\bm{v})=-\left.\partial F/\partial \nu\right|_{\nu=0}$}
\\
\hline
\end{tabular}
\end{table}

\section{Validation via Blade Element Theory}
\label{sec:bet}
This section validates \rev{theoretically} Assumptions~\ref{ass:aero_damping}--\ref{ass:aero_hardening} by exhibiting a minimal thrust expression derived from a first-order Blade Element Theory (BET) argument~\cite{volodin2022blade}.
\rev{The derivation is not used as a high-fidelity rotorcraft model. It is used to expose, in closed form, the speed--inflow coupling that makes the damping coefficient grow with rotor speed.}

\subsection{A first-order thrust model affine in inflow}
To avoid notational overload, we consider a single propeller and denote its angular speed by $v>0$.
Let $\nu_{in}\in\mathbb{R}$ denote the axial inflow velocity at the rotor disk, with the sign convention of Sec.~\ref{sec:dualrotor}.
Consider a propeller with $N$ blades of radius $B$, constant chord $c$, and constant pitch angle $\theta_0$.
For a blade element at radial coordinate \rev{$\varrho\in[0,B]$,} the tangential speed is $v\varrho$.
Under a small-angle, first-order approximation and neglecting induced-inflow and profile-drag corrections, the inflow angle satisfies
\begin{equation}
\phi(\varrho)\approx \tfrac{\nu_{in}}{v\varrho},
\end{equation}
so that \rev{the angle of attack $\alpha$ is approximated by:}
\begin{equation}
\alpha(\varrho)\approx \theta_0-\phi(\varrho)
\approx \theta_0-\tfrac{\nu_{in}}{v\varrho}.
\end{equation}
Assuming a linear lift regime, \rev{the lift coefficient is}  $C_L=a\alpha$ with lift-curve slope $a>0$, and 
the \rev{elemental thrust contribution of the annular blade elements of radial width \(d\varrho\), summed over the \(N\) blades, is}
\begin{align}
dT
&\approx
\tfrac{1}{2}N\rho_a c\, C_L (v\varrho)^2 d\varrho \nonumber =
\tfrac{1}{2}N\rho_a c a\left(\theta_0-\tfrac{\nu_{in}}{v\varrho}\right)(v\varrho)^2 d\varrho \nonumber\\
&=
\tfrac{1}{2}N\rho_a c a\left(\theta_0 v^2\varrho^2-v\varrho\nu_{in}\right)d\varrho,
\end{align}
where $\rho_a$ is the air density.
Integrating over $\varrho\in[0,B]$ yields
\begin{equation}
\label{eq:bet_T}
T(v,\nu_{in})=
\tfrac{1}{2}N\rho_a c a\left(\tfrac{\theta_0B^3}{3}v^2-\tfrac{B^2}{2}v\nu_{in}\right).
\end{equation}
Defining
\begin{equation}
\label{eq:kT_kD_defs}
k_T \coloneqq \tfrac{1}{6}N\rho_a c a\theta_0B^3>0,
\quad
k_D \coloneqq \tfrac{1}{4}N\rho_a c aB^2>0,
\end{equation}
we obtain the affine-inflow thrust model
\begin{equation}
\label{eq:affine_inflow}
T(v,\nu_{in}) = k_Tv^2-k_Dv\nu_{in}.
\end{equation}

\subsection{Verification of damping and hardening}
\begin{proposition}[Verification of aerodynamic damping]
\label{prop:verify_damping}
For \eqref{eq:affine_inflow}, the inflow-sensitivity $\lambda(v,\nu_{in})\coloneqq -\partial T/\partial \nu_{in}$ satisfies $\lambda(v,\nu_{in})>0$ for all $v>0$.
\end{proposition}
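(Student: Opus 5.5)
The plan is a direct computation from the affine-inflow model \eqref{eq:affine_inflow}, with no earlier results needed beyond the definitions \eqref{eq:kT_kD_defs}. First, differentiate $T(v,\nu_{in})=k_Tv^2-k_Dv\nu_{in}$ with respect to $\nu_{in}$ at fixed $v$. The quadratic term does not depend on $\nu_{in}$, and the bilinear term is linear in $\nu_{in}$. Hence
\[
\lambda(v,\nu_{in})=-\tfrac{\partial T}{\partial \nu_{in}}(v,\nu_{in})=k_Dv,
\]
which is independent of $\nu_{in}$. The model is polynomial, so it is $\mathcal{C}^\infty$ and the partial derivative exists everywhere.

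Second, I will establish the sign. By \eqref{eq:kT_kD_defs}, $k_D=\tfrac14 N\rho_a c a B^2$. Each factor is positive: the blade count $N$, the air density $\rho_a$, the chord $c$, the lift-curve slope $a$, and $B^2$ with $B>0$. So $k_D>0$, and therefore $\lambda(v,\nu_{in})=k_Dv>0$ for every $v>0$ and every $\nu_{in}\in\mathbb{R}$.

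There is no real obstacle here; the claim follows almost immediately. The only point worth flagging is scope. Positivity holds uniformly in $\nu_{in}$ only within this first-order model, and the model itself is valid only for small inflow angles and a linear-lift regime. I would state this in a closing remark rather than in the proof.

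I would also note in passing that the same expression gives $\partial\lambda/\partial v=k_D>0$. This anticipates the verification of Assumption~\ref{ass:aero_hardening}, and at $\nu_{in}=0$ it yields $\sigma_a(\bm v)=k_D(v_1+v_2)$ in \eqref{eq:aero_damping_sum}.
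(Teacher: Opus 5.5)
Your proof is correct and is the same as the paper's. The paper computes $\lambda(v,\nu_{in})=k_Dv$ directly from the affine-inflow model and concludes positivity from $k_D>0$ and $v>0$.
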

\begin{proof}
From \eqref{eq:affine_inflow}, $\lambda(v,\nu_{in})=k_Dv$, which is strictly positive for $v>0$ since $k_D>0$.
\end{proof}
\begin{proposition}[Verification of aerodynamic hardening]
\label{prop:verify_hardening}
For \eqref{eq:affine_inflow}, the inflow-sensitivity is strictly increasing in rotor speed: $\tfrac{\partial \lambda}{\partial v}(v,\nu_{in})>0$ for all $v>0$.
\end{proposition}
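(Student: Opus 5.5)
The plan is to compute the mixed partial derivative directly from the affine-inflow model \eqref{eq:affine_inflow}, reusing the expression for $\lambda$ from the proof of Proposition~\ref{prop:verify_damping}. Since $T(v,\nu_{in})=k_Tv^2-k_Dv\nu_{in}$ is a polynomial in $(v,\nu_{in})$, it is $\mathcal{C}^\infty$. Its mixed partials therefore exist and commute, so the two forms of the hardening condition in \eqref{eq:aero_hardening} coincide.

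First, recall that $\lambda(v,\nu_{in})=-\partial T/\partial\nu_{in}=k_Dv$. Differentiating this with respect to $v$ gives
\begin{equation}
\tfrac{\partial \lambda}{\partial v}(v,\nu_{in})=-\tfrac{\partial^2 T}{\partial \nu_{in}\partial v}(v,\nu_{in})=k_D .
\end{equation}
This value is constant, independent of both $v$ and $\nu_{in}$.

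Second, invoke the definition \eqref{eq:kT_kD_defs}, namely $k_D=\tfrac14 N\rho_a c a B^2$. Here $N\ge1$ blades, the air density $\rho_a>0$, the chord $c>0$, the lift-curve slope $a>0$, and the radius $B>0$ are all strictly positive physical parameters. Hence $k_D>0$, and the inequality $\partial\lambda/\partial v>0$ holds for every $v>0$, indeed for every $(v,\nu_{in})$. This verifies Assumption~\ref{ass:aero_hardening} for the model.

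No step is a real obstacle; the computation is immediate. The only point worth stating explicitly is that positivity of $k_D$ depends on the physical parameters being positive, which the BET setup already guarantees. One could also note in passing that $\partial T/\partial v=2k_Tv-k_D\nu_{in}>0$ at $\nu_{in}=0$. Together with this proposition, that makes Proposition~\ref{prop:vada_damping} applicable to the BET model, with $\sigma_a(\bm v)=k_D(v_1+v_2)$.
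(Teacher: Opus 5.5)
Your proof is correct and follows the same route as the paper: compute $\lambda(v,\nu_{in})=k_Dv$ from \eqref{eq:affine_inflow}, then conclude $\partial\lambda/\partial v=k_D>0$ from the positivity of the parameters in \eqref{eq:kT_kD_defs}. Your extra remarks on smoothness, which make the two forms in \eqref{eq:aero_hardening} agree, and on the applicability of Proposition~\ref{prop:vada_damping} are accurate but not needed for the statement itself.
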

\begin{proof}
Using $\lambda(v,\nu_{in})=k_Dv$ gives $\tfrac{\partial \lambda}{\partial v}=k_D>0$.
\end{proof}
\begin{remark}[Consistency with monotone thrust in rotor speed]
\label{rem:monotone_T}
Model \eqref{eq:affine_inflow} yields $\tfrac{\partial T}{\partial v}=2k_Tv-k_D\nu_{in}$.
Hence $\partial T/\partial v>0$ holds at least locally around $\nu_{in}=0$ for any $v>0$, and more generally whenever $\nu_{in}<2(k_T/k_D)v$.
This supports Assumption~\ref{ass:monotone_thrust} in the operating regime used in Sec.~\ref{sec:vada}.
\end{remark}

\subsection{A minimal quadratic-force analogue}
The mixed (multi-linear) term $-k_Dv\nu_{in}$ in \eqref{eq:affine_inflow} is the ingredient that enables VADA behavior.
It implies $\lambda(v,\nu_{in})=k_Dv$, i.e., the incremental inflow sensitivity, and thus the incremental damping defined in Sec.~\ref{sec:vada}, grows linearly with rotor speed.
This is structurally analogous to the VSA force law $r(x)=\tfrac{1}{2}kx^2$, for which the tangent stiffness $r'(x)=kx$ grows linearly with internal displacement.
Accordingly, within the validity of the first-order model, aerodynamic co-contraction provides a mechanism to tune an effective aerodynamic damping around a trim, complementing the active promptness viewpoint of~\cite{Franchi2026ICUAS}, \rev{see Table~\ref{tab:vsa_vada_missing_corner}}.

\section{Generalization to a Generic Trim Airspeed}
\label{sec:trim_general}
Sections~\ref{sec:dualrotor}--\ref{sec:vada} specialized the definitions to $\nu=0$, which is the natural analogue of the VSA equilibrium $\theta=0$.
The same incremental notion of aerodynamic damping, however, is defined at any trim airspeed $\bar\nu\in\mathbb{R}$ and can be modulated along the corresponding constant-force fibers.

\subsection{Incremental damping around $\nu=\bar\nu$}
Recall the net force map $F(\bm{v},\nu)=T(v_1,\nu)-T(v_2,-\nu)$, with $\bm{v}=(v_1,v_2)\in\mathcal{V}\subset\mathbb{R}^2_{>0}$.
For a fixed operating point $\bm{v}$ and trim airspeed $\bar\nu$, define the incremental aerodynamic damping as
\begin{equation}
\label{eq:def_sigma_a_bar}
\sigma_a(\bm{v};\bar\nu)
\coloneqq
-\left.\tfrac{\partial F}{\partial \nu}\right|_{\nu=\bar\nu}.
\end{equation}
Using the chain rule and $\lambda(v,\nu_{in})\coloneqq-\partial T/\partial \nu_{in}$ yields
\begin{equation}
\label{eq:sigma_a_bar}
\sigma_a(\bm{v};\bar\nu)=
\lambda(v_1,\bar\nu)+\lambda(v_2,-\bar\nu).
\end{equation}
Under Assumption~\ref{ass:aero_damping}, $\sigma_a(\bm{v};\bar\nu)>0$ in the operating regime of interest.
This is the direct trim analogue of the VSA stiffness $\sigma(\bm{x})$ defined at $\theta=0$ in \eqref{eq:vsa_def_stiffness}.

\subsection{Co-contraction on constant-force fibers at $\nu=\bar\nu$}
Fix a desired net force $\bar F$ at the trim $\bar\nu$, and define the trim task map $f_{a,\bar\nu}:\mathcal{V}\to\mathbb{R}$ by
\begin{equation}
\label{eq:fa_bar}
f_{a,\bar\nu}(\bm{v})\coloneqq F(\bm{v},\bar\nu)=T(v_1,\bar\nu)-T(v_2,-\bar\nu).
\end{equation}
The associated fiber is
\begin{equation}
\label{eq:fiber_bar}
\mathcal{F}_{\bar F}^{\bar\nu}
\coloneqq
\left\{\bm{v}\in\mathcal{V}\ \middle|\ f_{a,\bar\nu}(\bm{v})=\bar F\right\}.
\end{equation}
\begin{proposition}[Co-contraction increases incremental damping at a trim]
\label{prop:vada_trim}
Assume that $T$ is $\mathcal{C}^2$ and that Assumptions~\ref{ass:monotone_thrust} and \ref{ass:aero_hardening} hold at inflows $\nu_{in}\in\{\bar\nu,-\bar\nu\}$.
Then, along any constant-force fiber $\mathcal{F}_{\bar F}^{\bar\nu}$, every aerodynamic co-contraction displacement strictly increases $\sigma_a(\cdot;\bar\nu)$.
\end{proposition}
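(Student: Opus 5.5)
The proof will mirror that of Proposition~\ref{prop:vada_damping}, with the single evaluation inflow $0$ replaced by the two distinct inflows $\bar\nu$ (rotor~1) and $-\bar\nu$ (rotor~2) dictated by \eqref{eq:inflows}. Let $\bm{v}\in\mathcal{F}_{\bar F}^{\bar\nu}$ and $dv\in T_{\bm{v}}\mathcal{F}_{\bar F}^{\bar\nu}$.

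\textbf{Step 1: tangency condition.} Differentiating \eqref{eq:fa_bar} and imposing $df_{a,\bar\nu}=0$ gives
\begin{equation*}
0=\tfrac{\partial T}{\partial v}(v_1,\bar\nu)\,dv_1-\tfrac{\partial T}{\partial v}(v_2,-\bar\nu)\,dv_2 .
\end{equation*}
Assumption~\ref{ass:monotone_thrust}, applied at $\nu_{in}=\bar\nu$ and at $\nu_{in}=-\bar\nu$, makes both coefficients strictly positive. Hence $dv_2/dv_1$ equals the positive ratio of these partials. Two consequences follow:
\begin{itemize}
\item tangent vectors with same-sign components exist, so co-contraction is an admissible fiber direction;
\item since the gradient of $f_{a,\bar\nu}$ is nonzero, the implicit function theorem makes the fiber a $\mathcal{C}^1$ one-dimensional curve near $\bm{v}$.
\end{itemize}
This uses the $\mathcal{C}^2$ hypothesis, which gives at least $\mathcal{C}^1$ regularity of the gradient.

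\textbf{Step 2: differential of the damping.} Differentiate \eqref{eq:sigma_a_bar} with respect to $\bm{v}$ at fixed $\bar\nu$:
\begin{equation*}
d\sigma_a=\tfrac{\partial\lambda}{\partial v}(v_1,\bar\nu)\,dv_1+\tfrac{\partial\lambda}{\partial v}(v_2,-\bar\nu)\,dv_2 .
\end{equation*}
The $\mathcal{C}^2$ assumption on $T$ ensures that $\lambda=-\partial T/\partial\nu_{in}$ is $\mathcal{C}^1$, so this differential is well defined. Assumption~\ref{ass:aero_hardening}, invoked at the two inflows $\pm\bar\nu$, makes both coefficients strictly positive. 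For a co-contraction direction we have $dv_1,dv_2>0$, so $d\sigma_a>0$.

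\textbf{Step 3: from differentials to monotonicity.} Take any $\mathcal{C}^1$ parameterization $\bm{v}(s)$ of the fiber whose velocity is a co-contraction direction. The previous step shows that $s\mapsto\sigma_a(\bm{v}(s);\bar\nu)$ has strictly positive derivative, so it is strictly increasing.

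\textbf{Main obstacle.} The only real subtlety is the asymmetry introduced by $\bar\nu\neq0$: the two rotors are evaluated at different inflows. This is why the hypotheses must hold at both $\bar\nu$ and $-\bar\nu$, and not at a single inflow. Note in particular that rotor~2 sees inflow $-\bar\nu$, where the BET model of Remark~\ref{rem:monotone_T} shows $\partial T/\partial v$ may even be larger.

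Assumption~\ref{ass:aero_damping} is not needed for the monotonicity itself. It only guarantees that $\sigma_a>0$, so the statement should not rely on it.
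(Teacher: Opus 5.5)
Your proposal is correct and follows the paper's proof essentially verbatim: differentiate $f_{a,\bar\nu}$ to get the fiber tangency condition, use Assumption~\ref{ass:monotone_thrust} at $\pm\bar\nu$ to show co-contraction is admissible, then differentiate \eqref{eq:sigma_a_bar} and apply Assumption~\ref{ass:aero_hardening} at $\pm\bar\nu$. The implicit-function-theorem regularity, the integrated monotonicity along a parameterization, and the observation that Assumption~\ref{ass:aero_damping} is not needed are harmless refinements that the paper leaves implicit.
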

\begin{proof}
\rev{The proof is the trim-shifted version of Proposition~\ref{prop:vada_damping}. Differentiating $f_{a,\bar\nu}$ gives \[ 0= \tfrac{\partial T}{\partial v}(v_1,\bar\nu)\,dv_1 - \tfrac{\partial T}{\partial v}(v_2,-\bar\nu)\,dv_2 . \] By Assumption~\ref{ass:monotone_thrust}, aerodynamic co-contraction is an admissible fiber direction. Differentiating \eqref{eq:sigma_a_bar} gives \[ d\sigma_a= \tfrac{\partial\lambda}{\partial v}(v_1,\bar\nu)\,dv_1+ \tfrac{\partial\lambda}{\partial v}(v_2,-\bar\nu)\,dv_2 , \] which is strictly positive by Assumption~\ref{ass:aero_hardening}. }
\end{proof}

\begin{remark}[Wind and air-relative speed]
If the body speed is $v_{\text{\rm body}}$ and the wind speed along the axis is $v_{\text{\rm wind}}$, then $\nu=v_{\text{\rm body}}-v_{\text{\rm wind}}$.
The above trim formulation therefore applies under steady wind by selecting $\bar\nu=\bar v_{\text{\rm body}}-v_{\text{\rm wind}}$.
\end{remark}

\section{Aerodynamic Impedance and Passive Velocity Control with Variable Damping}
\label{sec:impedance_control}
The physical meaning of the incremental damping becomes transparent by embedding the force map into the translational dynamics.
Assume still air, let $m>0$ be the body mass, $\nu\in\mathbb{R}$ the air-relative velocity, and $F_{\text{\rm ext}}$ an external force along the axis.
The dynamics read
\begin{equation}
\label{eq:newton_general}
m\dot{\nu}=F(\bm{v},\nu)+F_{\text{\rm ext}},
\end{equation}
with $F(\bm{v},\nu)$ given by \eqref{eq:net_force}.

\subsection{Impedance structure under the affine-inflow model}
Using \eqref{eq:affine_inflow} in \eqref{eq:net_force} with $\nu_{in,1}=\nu$ and $\nu_{in,2}=-\nu$, and substituting into \eqref{eq:newton_general}, gives
\begin{equation}
\label{eq:cart_dynamics}
m\dot{\nu}=k_T(v_1^2-v_2^2)-k_D(v_1+v_2)\nu+F_{\text{\rm ext}}.
\end{equation}
Rearranging,
\begin{equation}
\label{eq:impedance_form}
m\dot{\nu}+c_{\text{\rm app}}(\bm{v})\nu
=F_{\text{\rm act}}(\bm{v})+F_{\text{\rm ext}},
\end{equation}
where
\begin{equation}
\label{eq:definitions_capp_fact}
c_{\text{\rm app}}(\bm{v})\coloneqq k_D(v_1+v_2),
\qquad
F_{\text{\rm act}}(\bm{v})\coloneqq k_T(v_1^2-v_2^2).
\end{equation}
The coefficient $c_{\text{\rm app}}(\bm{v})$ is a physical viscous term in this model: it multiplies the air-relative velocity and therefore represents a local velocity-to-force \rev{impedance}.
Moreover, since $\lambda(v,0)=k_Dv$, one has $c_{\text{\rm app}}(\bm{v})=\sigma_a(\bm{v})$ as defined in \eqref{eq:aero_damping_sum}.

\subsection{Equilibrium velocity and common/differential-mode roles}
Set $F_{\text{\rm ext}}=0$ and consider constant rotor speeds.
An equilibrium velocity $\nu_{\text{\rm eq}}$ satisfies
\begin{equation}
\label{eq:eq_balance}
k_D(v_1+v_2)\nu_{\text{\rm eq}}
=
k_T(v_1^2-v_2^2)
=
k_T(v_1-v_2)(v_1+v_2).
\end{equation}
Provided $v_1+v_2>0$,
\begin{equation}
\label{eq:nu_eq}
\nu_{\text{\rm eq}}(\bm{v})=\tfrac{k_T}{k_D}(v_1-v_2).
\end{equation}
Equation~\eqref{eq:impedance_form} can then be written as
\begin{equation}
\label{eq:impedance_shifted}
m\dot{\nu}+c_{\text{\rm app}}(\bm{v})\big(\nu-\nu_{\text{\rm eq}}(\bm{v})\big)=F_{\text{\rm ext}}.
\end{equation}
Thus, in the affine-inflow regime, the differential component $(v_1-v_2)$ sets the equilibrium air-relative velocity, while the common-mode component $(v_1+v_2)$ sets the local damping.
\rev{At a fixed trim, force tracking defines a fiber, while the internal coordinate can be selected to trade power, damping, and active promptness. One possible allocation interpretation is to choose $\bm{v}$ on the force fiber according to a supervisory objective combining these quantities, e.g.,}
\begin{equation}
\label{eq:allocation_interpretation}
\rev{
\min_{\bm{v}\in\mathcal{V}}
\{P(\bm{v})-\alpha_P\sigma_a(\bm{v};\bar\nu)-\beta_P\rho_a(\bm{v})\}
\quad
\text{s.t.}\quad
F(\bm{v},\bar\nu)=\bar F,}
\end{equation}
\rev{where $P$ is a power proxy and $\alpha_P,\beta_P\ge 0$ encode the desired readiness level. The present work does not propose a full controller; it identifies the passive damping quantity that such an allocator could regulate.}

\rev{
\section{Propeller-Data-Based Scale Assessment} \label{sec:data_based_scale} 
}

\rev{
The analysis above shows that the VADA mechanism is governed by the local
inflow-sensitivity coefficient $\lambda(v,\nu_{in})$ defined in
\eqref{eq:def_lambda}. We now estimate the corresponding force--velocity
derivative from published propeller measurements. The purpose is to test
whether the derivative used in the theory has a non-negligible magnitude for
small-UAV-scale propellers, and to compare its scale with ordinary body-drag
damping.

We use the UIUC Propeller Database~\cite{UIUCPropDB}, which reports fixed-RPM
wind-tunnel measurements of propeller thrust and power while sweeping the
axial freestream velocity. This data structure is directly suited to our
purpose, because it provides the local variation of thrust with axial inflow
at prescribed propeller speeds.
The database is expressed in the standard whole-propeller coefficients
\begin{equation}
T = \rho_a n^2D^4 C_T(J), \qquad
P = \rho_a n^3D^5 C_P(J),
\label{eq:uiuc_coefficients}
\end{equation}
where \(T\) is the thrust force, \(P\) is shaft power, \(C_T\) and \(C_P\)
are the nondimensional thrust and power coefficients, \(n=v/(2\pi)\) is the
rotational frequency in revolutions per second, \(D\) is the propeller
diameter, and
\begin{equation}
J=\tfrac{\nu_{in}}{nD}
\label{eq:advance_ratio_database}
\end{equation}
is the advance ratio. These whole-propeller coefficients are distinct from
the sectional lift coefficient \(C_L\) used in Sec.~\ref{sec:bet}.

Differentiating \eqref{eq:uiuc_coefficients} with respect to the physical
axial inflow $\nu_{in}$ at fixed $v$, and using
\eqref{eq:advance_ratio_database}, gives
\begin{equation}
\tfrac{\partial T}{\partial \nu_{in}} = \rho_a nD^3 C_T'(J).
\end{equation}
Consequently, the single-rotor coefficient $\lambda(v,0)$ in
\eqref{eq:def_lambda} can be obtained directly from the low-$J$ slope of the
measured thrust-coefficient curve:
\begin{equation}
\lambda(v,0)
=
-\left.\tfrac{\partial T}{\partial \nu_{in}}\right|_{\nu_{in}=0}
=
\rho_a nD^3\bigl(-C_T'(0)\bigr).
\label{eq:lambda_from_CT_slope}
\end{equation}
For an antagonistic pair of identical propellers operated on the symmetric
co-contraction branch, \eqref{eq:aero_damping_sum} therefore becomes
\begin{equation}
\sigma_a(v_b,v_b)
=
2\rho_a nD^3\bigl(-C_T'(0)\bigr),
\qquad
n=\tfrac{v_b}{2\pi}.
\label{eq:data_pair_damping}
\end{equation}

For each fixed-RPM run, we estimate $C_T'(0)$ by fitting
\begin{equation}
C_T(J)\simeq C_T(0)+C_T'(0)J
\label{eq:CT_linear_fit}
\end{equation}
using the static value $C_T(0)$, interpolated from the corresponding UIUC
static file, together with the first low-advance-ratio points satisfying
$J\leq 0.25$. A run is used only if at least two dynamic points are available
in this low-$J$ interval, the fitted slope satisfies $-C_T'(0)>0$, and the
coefficient of determination of the local linear fit satisfies
\begin{equation}
R_{\rm fit}^2 \geq 0.85.
\label{eq:fit_quality_condition}
\end{equation}
The last condition is used only to exclude cases where the available low-$J$
samples do not represent a sufficiently local linear trend. No parameter of
the VADA model is tuned to match the data. We use
$\rho_a=1.225\,\mathrm{kg/m^3}$ and the nominal diameters encoded in the UIUC
file names. The corresponding pair aerodynamic power is computed from the
static power coefficient as
\begin{equation}
P_{\mathrm{pair}} = 2\rho_a n^3D^5C_P(0).
\label{eq:data_pair_power}
\end{equation}

To avoid relying on a single propeller, we consider three characteristic UIUC
families for which at least four propellers are available in the selected
data: APC Slow Flyer, APC Sport, and APC Thin Electric. 
These families provide 16 propellers and 62 retained low-$J$ runs after excluding cases with fewer than two retained fixed-RPM runs.
The family-level results are
summarized in Table~\ref{tab:family_summary}. The data-derived damping is
consistently of order $10^{-1}\,\mathrm{N\,s/m}$ and reaches values close to
$1\,\mathrm{N\,s/m}$ for the most favorable slow-flyer geometries.
}

\begin{table}[t]
\centering
\caption{\rev{Family-level summary of the data-derived VADA scale.}}
\label{tab:family_summary}
\scriptsize
\setlength{\tabcolsep}{4.0pt}
\renewcommand{\arraystretch}{1.12}
\rev{
\begin{tabular}{lccccc}
\toprule
APC Family & Props. & Runs &
$\sigma_a$ range &
Median $\sigma_a$ &
Median $P_{\mathrm{pair}}$ \\
& & & [Ns/m] & [Ns/m] & [W] \\
\midrule
Slow Flyer    & 6  & 24 & 0.194--0.855 & 0.340 & 75.2 \\
Sport         & 4  & 16 & 0.109--0.471 & 0.289 & 86.5 \\
Thin Electric & 6  & 22 & 0.121--0.404 & 0.243 & 75.5 \\
\midrule
All           & 16 & 62 & 0.109--0.855 & 0.296 & 75.5 \\
\bottomrule
\end{tabular}}
\end{table}

\rev{
A propeller-level summary is reported in Table~\ref{tab:propeller_summary}.
The variation across families is physically meaningful. The APC Slow Flyer
propellers, which are intended for low-speed operation, display the largest
low-$J$ thrust sensitivities in the analyzed set. The APC Sport and APC Thin
Electric families still produce positive values of $-C_T'(0)$ in the retained
runs, but their effective coefficients are generally smaller. Thus, the data
support the mechanism while also showing that VADA performance is
propeller-dependent: co-contraction can shape passive aerodynamic damping,
but effective use of the principle requires propellers with a sufficiently
large low-advance-ratio thrust sensitivity.
}

\begin{table}[t]
\centering
\caption{\rev{Propeller-level data-derived VADA scale. The damping range is the antagonistic-pair value \eqref{eq:data_pair_damping}. The power range is computed from \eqref{eq:data_pair_power}. The last column reports $m/\sigma_a$ for $m=0.5\,\mathrm{kg}$ in the same order as the damping range, so that larger damping corresponds to a smaller velocity-decay time.}}
\label{tab:propeller_summary}
\scriptsize
\setlength{\tabcolsep}{3.2pt}
\renewcommand{\arraystretch}{1.08}
\rev{
\begin{tabular}{llccccc}
\toprule
APC Family & Propeller & Runs & RPM range & $\sigma_a$ & $P_{\mathrm{pair}}$ & $m/\sigma_a$ \\
& & & & [Ns/m] & [W] & [s] \\
\midrule
Slow Flyer & 9$\times$4.7  & 4 & 4008--6815 & 0.235--0.475 & 22--108  & 2.13--1.05 \\
Slow Flyer & 9$\times$6    & 4 & 4016--6523 & 0.194--0.229 & 34--167  & 2.58--2.19 \\
Slow Flyer & 10$\times$4.7 & 4 & 4014--6512 & 0.333--0.660 & 36--176  & 1.50--0.76 \\
Slow Flyer & 10$\times$7   & 4 & 3008--6006 & 0.217--0.347 & 22--207  & 2.30--1.44 \\
Slow Flyer & 11$\times$3.8 & 4 & 3001--6011 & 0.311--0.713 & 15--141  & 1.61--0.70 \\
Slow Flyer & 11$\times$4.7 & 4 & 3004--6006 & 0.329--0.855 & 21--206  & 1.52--0.58 \\
\midrule
Sport & 10$\times$5 & 4 & 4005--6301 & 0.270--0.359 & 34--125 & 1.85--1.39 \\
Sport & 10$\times$6 & 4 & 4006--6496 & 0.109--0.296 & 41--162 & 4.59--1.69 \\
Sport & 11$\times$5 & 4 & 3009--6002 & 0.281--0.471 & 22--162 & 1.78--1.06 \\
Sport & 11$\times$6 & 4 & 3012--5992 & 0.180--0.362 & 27--196 & 2.78--1.38 \\
\midrule
Thin Electric & 9$\times$4.5  & 4 & 4002--6917 & 0.241--0.295 & 17--92  & 2.08--1.69 \\
Thin Electric & 9$\times$6    & 4 & 4003--6701 & 0.121--0.201 & 24--110 & 4.12--2.49 \\
Thin Electric & 10$\times$5   & 4 & 4005--6707 & 0.205--0.297 & 28--137 & 2.44--1.69 \\
Thin Electric & 10$\times$7   & 2 & 6020--6531 & 0.209--0.236 & 134--171 & 2.40--2.12 \\
Thin Electric & 11$\times$5.5 & 4 & 3010--6002 & 0.350--0.404 & 16--134 & 1.43--1.24 \\
Thin Electric & 11$\times$7   & 4 & 3003--5988 & 0.126--0.291 & 23--186 & 3.97--1.72 \\
\bottomrule
\end{tabular}
}
\end{table}

\rev{
The identified values should be compared with the incremental damping created by ordinary body drag. For the standard quadratic model \begin{equation} F_{\mathrm{drag}}(\nu) = \tfrac{1}{2}\rho_a C_DA\,\nu|\nu|, \end{equation} the local damping about a trim air-relative speed $\bar\nu$ is \begin{equation} c_b(\bar\nu) = \left. \tfrac{\partial F_{\mathrm{drag}}}{\partial \nu} \right|_{\nu=\bar\nu} = \rho_a C_DA|\bar\nu|. \label{eq:body_drag_damping} \end{equation} This term vanishes at hover, whereas $\sigma_a$ in \eqref{eq:data_pair_damping} remains finite around a powered co-contracted trim. Table~\ref{tab:body_drag_comparison} reports representative values for $C_DA\in[0.01,0.05]\,\mathrm{m^2}$, compatible with small multirotor projected drag areas. At $|\bar\nu|=5\,\mathrm{m/s}$, the resulting body-drag damping is $0.061$--$0.306\,\mathrm{N\,s/m}$, while the median UIUC data-derived VADA damping over the selected families is about $0.296\,\mathrm{N\,s/m}$ and the largest values exceed $0.8\,\mathrm{N\,s/m}$. Thus, the identified effect is of the same scale as ordinary low-speed aerodynamic damping and is finite even at hover, where the quadratic body-drag linearization vanishes.
}

\begin{table}[t] \centering \caption{\rev{Ordinary body-drag linearization from \eqref{eq:body_drag_damping}, using $\rho_a=1.225\,\mathrm{kg/m^3}$.}} \label{tab:body_drag_comparison} \scriptsize \setlength{\tabcolsep}{4pt} \renewcommand{\arraystretch}{1.12} 
\rev{\begin{tabular}{ccc} \toprule $|\bar\nu|$ [m/s] & $c_b$ for $C_DA=0.01$ [Ns/m] & $c_b$ for $C_DA=0.05$ [Ns/m] \\ \midrule 0 & 0 & 0 \\ 2 & 0.025 & 0.123 \\ 5 & 0.061 & 0.306 \\ 10 & 0.123 & 0.613 \\ 15 & 0.184 & 0.919 \\ \bottomrule \end{tabular}}
\end{table}

\rev{
\section{Critical Discussion and Future Work}
\label{sec:discussion}
}

\rev{
The preceding sections establish VADA at three levels:
(i) internal fiber motions in antagonistic actuation,
(ii) the BET-based speed--inflow coupling that makes damping grow with rotor speed, and
(iii) propeller-data evidence that the resulting scale is relevant for small-UAV propellers.
The UIUC data support the central modeling assumption: real propellers exhibit positive low-inflow thrust sensitivity, yielding antagonistic-pair damping of order $10^{-1}\,\mathrm{N\,s/m}$, with favorable cases approaching $1\,\mathrm{N\,s/m}$.

\subsubsection*{Structural analogy and aerodynamic meaning}
The VSA--VADA correspondence is local and structural.
It concerns fiber motions, internal co-contraction, and modulation of an incremental impedance-like coefficient while preserving the commanded task output.
In an antagonistic VSA, co-contraction modulates the displacement-to-torque map
\(
    \Delta\theta \mapsto \Delta\tau,
\)
i.e., stiffness.
In the dual-rotor module, co-contraction modulates the air-relative-velocity-to-force map
\(
    \Delta\nu \mapsto \Delta F,
\)
i.e., aerodynamic damping.
Thus the common structure is the use of an internal redundant coordinate to change a passive local response without changing the commanded task variable.

Under quasi-steady aerodynamics, force depends on local flow conditions, including axial inflow.
The passive term appearing in a trim linearization is therefore velocity-dependent:
\(
    \sigma_a(\bm v;\bar\nu)
    =
    -\left.\tfrac{\partial F}{\partial \nu}\right|_{\nu=\bar\nu}.
\)
The internal fiber coordinate changes this local velocity-to-force slope while keeping the trim force fixed, giving aerodynamic co-contraction its passive interpretation.

\subsubsection*{Magnitude, power, and propeller dependence}
The data-based assessment in Sec.~\ref{sec:data_based_scale} shows that the identified damping is quantitatively relevant.
For the three UIUC propeller families considered, the retained low-advance-ratio data imply
\(
    \sigma_a \simeq 0.1\text{--}0.9\,\mathrm{N\,s/m}.
\)
These values are comparable to ordinary body-drag linearization at low-to-moderate air-relative speeds.
For a quadratic drag model, body-drag damping vanishes at $\bar\nu=0$ and grows linearly with trim airspeed, whereas $\sigma_a$ is finite around a powered co-contracted trim.
VADA is therefore most relevant around hover, near-hover, and low-speed operation, where passive body-drag damping is weak and disturbance rejection or wind interaction can be critical.

VADA complements aerodynamic promptness~\cite{Franchi2026ICUAS}: the same internal loading that increases active force responsiveness along a constant-force fiber can also increase the passive local velocity-to-force response.
This suggests an allocation viewpoint in which the internal coordinate regulates effort, active authority, and passive aerodynamic impedance.
The energetic cost remains central: in the affine-inflow model damping grows with common-mode rotor speed, whereas the usual aerodynamic power proxy grows cubically.
The UIUC data show the same qualitative tradeoff through non-negligible power values associated with larger identified damping.

The effect is also propeller-dependent.
Different propeller families exhibit different low-$J$ thrust slopes and therefore different effective values of $\lambda(v,0)$.
The analyzed data show that APC Slow Flyer propellers generally provide larger damping values than several APC Thin Electric and APC Sport propellers.
Thus, effective use of VADA requires propellers with sufficiently large low-advance-ratio thrust sensitivity; co-contraction supplies the allocation mechanism, while blade geometry determines the achievable damping per unit power.

\subsubsection*{Scope, limitations, and future work}
The minimal dual-rotor model isolates the mechanism under prescribed, non-interfering axial inflows.
Complete multirotor platforms introduce coupled wakes, tilted propellers, body attitude changes, rotor dynamics, nonuniform inflow, and vehicle-level aerodynamic couplings.
These effects modify the force map and may change the monotonicity or magnitude of the damping modulation.
Nevertheless, the definition of trim incremental damping applies to any differentiable aerodynamic force map.
For each architecture, the relevant task is to identify internal wrench-preserving directions that increase the chosen damping measure.

Future work will identify $\sigma_a$ directly on a dedicated dual-rotor setup and extend the analysis to complete multirotor platforms with coupled aerodynamic effects.
}

\section*{Acknowledgments}
The authors thanks colleagues, mentors, and mentees for inspiring discussions. 
The authors also acknowledges the LLM Gemini 3 (2026) and Copilot (GPT 5.5) for assistance with proofreading and code-generation. The accuracy of all resulting outputs was carefully and manually verified by the authors.

\printbibliography[title={References}]
\end{document}